\documentclass{article}

\usepackage{PRIMEarxiv}

\usepackage[utf8]{inputenc} 
\usepackage[T1]{fontenc}    
\usepackage{hyperref}       
\usepackage{url}            
\usepackage{booktabs}       
\usepackage{amsfonts}       
\usepackage{nicefrac}       
\usepackage{algorithm}
\usepackage{algorithmic}
\usepackage{cite}
\usepackage{amsmath,amssymb,amsfonts}
\usepackage{algorithmic}
\usepackage{graphicx}
\usepackage{textcomp}
\usepackage{xcolor}
\usepackage{amssymb}

\usepackage{clrscode}
\usepackage{multirow}
\usepackage{multicol}
\usepackage{mathtools}
\usepackage{amsfonts}
\usepackage{array}

\usepackage{amsthm}
\usepackage{makecell}
\usepackage{tikz}
\usepackage{pgfplots}
\usepackage{booktabs}
\usepackage{tabularx}
\usepackage{caption}
\usepackage{subcaption}
\usepackage{enumitem}
\usepackage{xspace}

\usepackage{amsthm}
\usepackage{thmtools, thm-restate}
\usepackage{color,soul}
\usepackage{tabularray}

\newcommand{\mbbE}{\mathbb{E}}
\newcommand{\mbbR}{\mathbb{R}}
\newcommand{\mbbI}{\mathbb{I}}

\newcommand{\calR}{\mathbb{R}}

\newcommand{\calX}{\mathcal{X}}
\newcommand{\calY}{\mathcal{Y}}
\newcommand{\calS}{\mathcal{S}}
\newcommand{\calA}{\mathcal{A}}

\newcommand{\calD}{\mathcal{D}}
\newcommand{\calL}{\mathcal{L}}
\newcommand{\calM}{\mathcal{M}}
\newcommand{\calN}{\mathcal{N}}

\newcommand{\calO}{\mathcal{O}}
\newcommand{\calQ}{\mathcal{Q}}
\newcommand{\calW}{\mathcal{W}}

\newcommand*{\renyi}{R{\'e}nyi\@\xspace}


\usepackage{microtype}      
\usepackage{lipsum}
\usepackage{fancyhdr}       
\usepackage{graphicx}       
\graphicspath{{media/}}     

\pagestyle{fancy}
\thispagestyle{empty}
\rhead{ \textit{ }} 


\title{RQP-SGD: Differential Private Machine Learning through Noisy SGD and Randomized Quantization
\thanks{This work is accepted by the 5th AAAI Workshop on Privacy-Preserving Artificial Intelligence.} 
}

\author{
  Ce Feng, Parv Venkitasubramaniam \\
  Department of Electrical and Computer Engineering \\
  Lehigh University \\
  City\\
  \texttt{\{cef419, pav309\}@lehigh.edu}
}

\begin{document}
\maketitle

\begin{abstract}
The rise of IoT devices has prompted the demand for deploying machine learning at-the-edge with real-time, efficient, and secure data processing. In this context, implementing machine learning (ML) models with real-valued weight parameters can prove to be impractical particularly for large models, and there is a need to train models with quantized discrete weights. At the same time, these low-dimensional models also need to preserve privacy of the underlying dataset. In this work, we present RQP-SGD, a new approach for privacy-preserving quantization to train machine learning models for low-memory ML-at-the-edge. This approach combines differentially private stochastic gradient descent (DP-SGD) with randomized quantization, providing a measurable privacy guarantee in machine learning. In particular, we study the utility convergence of implementing RQP-SGD on ML tasks with convex objectives and quantization constraints and demonstrate its efficacy over deterministic quantization. Through experiments conducted on two datasets, we show the practical effectiveness of RQP-SGD.
\end{abstract}

\section{Introduction}\label{sec:introduction}

As IoT devices proliferate across industries, there is an increasing demand to process data closer to the source~\cite{feng2022inferential,wang2022privstream}, enabling real-time insights and decision-making. There is now a growing need for machine learning (ML) at-the-edge, where ML models are deployed directly onto IoT devices or gateways, enabling them to perform data analysis and make predictions locally, without the need to transmit raw data to centralized servers. This approach not only reduces latency and conserves network resources but can also enhance privacy and security.

Although ML algorithms have shown tremendous success in various domains, ML at the edge brings unique constraints in the limited dimensionality of the models learned, and the need for strong privacy guarantees, particularly for IoT deployed in sensitive applications such as health monitoring and energy management. The objective of this work is to propose a joint privacy-preserving quantization approach to train neural networks for ML-at-the-edge IoT applications.


Past studies~\cite{yu2019differentially,yu2021large,papernot2021tempered,andrew2021differentially} propose different approaches to guaranteeing privacy in ML, notably through the concept of differential privacy~\cite{dwork2006differential} - a widely accepted quantitative measure of privacy. Specifically, these methods rely on a \textit{noisy} training approach known as \textit{Differentially Private Stochastic Gradient Descent (DP-SGD)}. DP-SGD~\cite{abadi2016deep} directly perturbs the gradient at each descent update with random noise drawn from the Gaussian distribution, resulting in a significant impact on utility. Several recent papers propose noise reduction methods for the differentially private noise added to the reduced gradient. For instance, recent studies by \cite{yu2021do, yu2021large, nasr2020improving} have explored methods for applying DP-SGD to gradients in a reduced dimensional space. Another work~\cite{feng2023spectral} performs differentially private perturbations in the spectral domain and introduces filtering for noise reduction. 

In this work, we propose Randomized Quantization Projection-Stochastic Gradient Descent (RQP-SGD), a new approach to achieving differential privacy in ML when weights need to be discretized. Our work is motivated by the knowledge that quantization is a process of removing redundant information by converting model parameters from high-precision to low-precision representations. Since privacy also requires the removal of sensitive information, our approach exploits the synergies between these two processes. 
Furthermore, quantization effectively reduces the memory and computation costs of deploying ML models, thus catering to the specific application in consideration.
In this work, our approach is to develop and analyze a randomized quantization approach in the training of ML models that can provide measurable differential privacy.
A study~\cite{xiong2016randomized} introduced a randomized requantization technique, primarily used as a compression mechanism in IoT systems and sensor networks. This technique simultaneously achieves local differential privacy and compression. However, it also highlights a notable trade-off between privacy, compression, and utility, a consequence of the information loss inherent in quantization. Our research extends the paradigm of randomized quantization to the domain of machine learning, adapting it into a projected stochastic gradient descent (Proj-SGD). We further show that our proposed approach achieves a better utility-privacy trade-off than deterministic quantization through theoretical analysis and experimental validation.



 



\textbf{Main Contribution} The main contributions of the paper are as follows: 
\begin{itemize}
    \item We propose RQP-SGD, a randomized quantization projection based SGD in ML with a differential privacy guarantee. 
    \item We theoretically study the utility-privacy trade-off of RQP-SGD for ML with convex and bounded loss functions. 
    \item Through experiments on two classification datasets: MNIST and Breast Cancer Wisconsin (Diagnostic) dataset~\cite{misc_breast_cancer_wisconsin_(diagnostic)_17}, the latter being a dataset collected from IoT devices, we demonstrate that RQP-SGD can achieve better utility performance than implementing DP-SGD in machine learning with quantized parameters. Significantly, RQP-SGD achieves a 35\% higher accuracy on the Diagnostic dataset while maintaining a $(1.0,0)$-DP, thereby validating its compatibility and efficacy for deployment in IoT systems.
    \item We conduct a comprehensive experimental analysis of how various RQP parameters influence the utility-privacy balance. This includes examining the effects of quantization-induced randomness, noise scale in gradient updates, and quantization bit granularity. Our findings highlight that while quantization-induced randomness can enhance utility, excessive randomness may have a detrimental effect on utility.
\end{itemize}

\subsection{Related Work}\label{sec:related_work}
The subject of differential privacy in ML has attracted significant scientific interest in recent years. Specifically, it has been used in support vector machine~\cite{li2014privacy}, linear/logistic regression~\cite{zhang2012functional, chaudhuri2008privacy} and risk minimization~\cite{bassily2014private, chaudhuri2011differentially, kasiviswanathan2016efficient, bassily2019private}. DP-SGD~\cite{abadi2016deep, yu2019differentially} as mentioned earlier perturbs the gradient at each SGD update. 

Quantization in ML is categorized into two types: post-training quantization (PTQ)~\cite{krishnan2019quantized, nagel2020up, nahshan2021loss}, and quantization aware training (QAT)~\cite{nagel2022overcoming,sakr2022optimal,courbariaux2015binaryconnect}. Our work fits into the latter category, modeling the quantized ML as a quantization-constrained optimization problem~\cite{bai2018proxquant}. 

Differentially private quantization scheme in releasing data has been studied in \cite{zhang2020optimal, xiong2016randomized}. We note another application of differential privacy and quantization in ML is federated learning. In recent years, there's been a growing body of research~\cite{lang2023joint, amiri2021compressive, gandikota2021vqsgd} that aims to study privacy-communication trade-off in federated learning. The main objective of private and efficient federated learning is transferring private, low-bandwidth gradient vectors to the server. 
\section{Preliminary}\label{sec:preliminary}
\paragraph{Differential Privacy (DP)}\label{subsec:dp}
Differential privacy~\cite{dwork2006differential} is a quantitative definition of privacy, initially designed in the context of databases. Specifically, it ensures that whether or not an individual's data is included in a dataset does not significantly affect the analysis results on that dataset. 
A randomized mechanism $\calM:\calD\mapsto\calR$ satisfies $(\epsilon, \delta)$-DP if for any two adjacent sets $d,d'\in\mathcal{D}$ and all possible outputs $\calO$ of $\calM$, it holds that
$$\text{Pr}[\mathcal{M}(d)\in \calO]\leqslant e^{\epsilon} \text{Pr}[\mathcal{M}(d')\in \calO]+\delta$$
A prevalent approach in applying differential privacy to a real-valued function $f:\calD\mapsto\calR$ involves the addition of noise that is carefully calibrated to function's sensitivity $S_f$. The sensitivity, $S_f$, is defined as the maximum possible difference $|f(d)-f(d')|$ between the outputs of $f$ for any two adjacent inputs $d$ and $d'$. A common example is the Gaussian mechanism, which involves adding noise perturbed by a Gaussian distribution directly to the output of the function $f$. This process can be formulated as:
$$Gauss(f,d,\sigma)=f(d)+\calN(0,S_f^2\sigma^2)$$
where $\calN(0,S_f^2\sigma^2)$ is the Gaussian distribution with zero mean and variance $S_f^2\sigma^2$. The Gaussian mechanism achieves $(\epsilon,\delta)$-DP when $\sigma=\sqrt{2\log(1.25/\delta)}/\epsilon$~\cite{dwork2014algorithmic}.

\paragraph{Machine Learning As Empirical Risk Minimization}
In this paper, we frame the training of a machine learning (ML) model as empirical risk minimization (ERM), utilizing a dataset $\calS=\{(x_i, y_i)\in\calX\times\calY):i=1,2,\cdots,n\}$ of $n$ data-label pairs. The ML model is denoted by a predictor $f:\calX\times\calW\mapsto\calY$ featured by a set of parameters $w\in\calW$. The quality of the predictor on training data is quantified through a non-negative loss function $l:\calY\times\calY\mapsto\calR$. We aim to choose optimal $w$ that minimizes the empirical loss:
\begin{equation}\label{eq:ERM}
    \mathop{\min\limits_{w\in\calW}} \hat{\calL}(w;\calS):=\frac{1}{n}\sum_{i=1}^{n}l(f(x_i;w),y_i)
\end{equation}
Stochastic Gradient Descent (SGD) is a widely used optimization method in machine learning. At each iteration, SGD selects a mini-batch consisting of $m$ training samples and computes the stochastic gradient $\sum^m_{j=1}\nabla l(f(w_t;x_j);y_j)$. This gradient serves as an estimation of the gradient derived from the entire training dataset. This technique incrementally adjusts model parameters by the stochastic gradient, effectively guiding the model towards optimal performance.

\paragraph{Differential Privacy in Machine Learning}
In the realm of machine learning, the ability of models to discern intricate patterns from training datasets brings forth significant privacy concerns, particularly regarding the inadvertent memorization and subsequent exposure of individual data points. This issue becomes more pronounced with the advent of sophisticated techniques such as model inversion and membership inference attacks, which can exploit these vulnerabilities to infringe upon individual privacy.

In addressing the privacy concerns inherent in machine learning, the framework of differential privacy (DP) has emerged as a formalized methodology. It is intricately designed to quantify and attenuate the risks associated with the dissemination of information extracted from sensitive datasets. It ensures that machine learning model outputs are carefully calibrated to prevent the inference of sensitive information about any particular individual. 

Mathematically, for the machine learning predictor denoted as $f$, it satisfies $(\epsilon, \delta)$-DP if for any two adjacent sets $x, x'\in\calX$, the following holds:
\begin{equation}\label{eq:ML-dp}
    \text{Pr}[f(w; x)]\leq e^{\epsilon}\text{Pr}[f(w; x')] +\delta  
\end{equation}
This formula is foundational to the principle of differential privacy in machine learning. It is designed to control the probability distribution of outcomes from the private machine learning model in a manner that is minimally influenced by the presence or absence of any individual data within the training dataset.

A prevalent technique to attain differential privacy in machine learning is differentially private stochastic gradient descent (DP-SGD). DP-SGD modifies the standard SGD update rule to incorporate the Gaussian mechanism, as described by the following formulation:
\begin{equation}\label{eqq:dp-sgd}
     w_{t+1} = w_t-\eta \cdot\frac{1}{m}[\sum_{j=1}^m {\nabla} l(f(w_t;x_j);y_j) + G_t ]
\end{equation}
where $w_{t+1}$ denotes the updated model parameters, $\eta$ is the step size, and the gradient ${\nabla} l(f(w_t;x_j);y_j)$ is calculated at each $t$-th iteration using the data pair $(x_j, y_j)$. The key to DP-SGD is to add Gaussian noise vector $G_t$ which is calibrated to the sensitivity of ${\nabla} l(f(w_t;x_j);y_j)$. The inclusion of this noisy gradient, processed through the Gaussian mechanism, ensures the privatization of the gradient. The gradient descent serves as the post-processing, which inherently preserves the differential privacy.  




However, these techniques typically presume that the model weights are real-valued. Our research pivots from this norm by aiming to achieve DP with quantized weights, essential in resource-constrained environments where model size and computational efficiency are critical. This pursuit addresses the need for privacy-preserving models in environments where resources are limited and hence quantization is essential for reducing computational demands and model size.


\paragraph{Quantized ML optimization} In the context of quantized machine learning models, the parameters are restricted to a discrete set within the parameter space. This approach to training quantized ML models formulates the optimization problem as follows:
\begin{equation}\label{eq:Q-ERM}
    \mathop{\min}\limits_{w\in\calW}\hat{L}(w;\calS) 
    \begin{array}{l}
     \text{  s.t. } w\in\calQ     
    \end{array}
\end{equation}
where $\calQ\subseteq\mbbR^d$ represents a discrete, non-convex quantization set. Given that (\ref{eq:Q-ERM}) is an integer optimization with non-linear constraints, it necessitates relaxation to a form amenable to solution via projected SGD (Proj-SGD)~\cite{yin2018binaryrelax, li2017training}:
\begin{equation}\label{eq:PSGD}
    \left\{
    \begin{array}{l}
     v_{t+1} = w_t-\eta \nabla l(f(w_t;\cdot);\cdot)   \\
     \\
    w_{t+1} = \text{Proj}_{\calQ}(v_{t+1}) 
    \end{array}\right.
\end{equation}
where $\nabla l(f(w_t;\cdot);\cdot)$ is the sampled mini-batch gradient at the $t$-th iteration, $\eta$ is the step size, and $\text{Proj}_{\calQ}(v)=\mathop{\arg\min}_{u\in\calQ}\|u-v\|$ is a projection that projects $v$ onto the quantization set. 

\section{Method}\label{sec:method}
\paragraph{Problem Setting}\label{subsec:problem_setting}
Our goal is to develop a differentially private optimization solution to (\ref{eq:Q-ERM}). More specifically, we aim to solve the following ERM problem:
\begin{equation}\label{eq:DP-Q-ERM}
\begin{array}{r}
     \mathop{\min}\limits_{w\in\calW}\hat{L}(w;\calS):=\frac{1}{n}\sum_{i=1}^{n}l(f(x_i;w),y_i)\\ 
     \\
  \text{  s.t. } \left\{\begin{array}{l}
    w\in\calQ  \\
    \\
   \text{Pr}[{f(w;x)}]\leq e^{\epsilon}\text{Pr}[{f(w;x')}]+\delta 
  \end{array}  \right.   
\end{array}
\end{equation}
where $x,x'$ are two adjacent subset of $\calX$. In this work, we incorporate two key assumptions:
\begin{itemize}
    \item The parameter space $\calW\subset\mbbR^d$ is a closed, convex set bounded by $M$: $\|w\|\leq M$, and the quantization set $\calQ$ is a discrete subset of $\calW$.
    \item For all data-label pair $(x_i, y_i)\in\calS$, the loss function, $l(f(x_i;w),y_i)$, is a convex and $\rho$-Lipschitz with respect to $w$, for example, binary cross-entropy loss in Logistic regression, and hinge loss in SVM classification.
\end{itemize}
A conventional approach to solving (\ref{eq:DP-Q-ERM}) is adapting DP-SGD (\ref{eqq:dp-sgd}) to the Proj-SGD (\ref{eq:PSGD}):
\begin{equation}\label{eq:DP-PSGD}
    \left\{
    \begin{array}{l}
     v_{t+1} = w_t-\eta \cdot\frac{1}{m}[\sum_{j=1}^m\nabla l(f(w_t;x_j);y_j) + G_t ] \\
     \\
    w_{t+1} = \text{Proj}_{\calQ}(v_{t+1}) 
    \end{array}\right.
\end{equation}
Here, $G_t\sim\calN(0,\sigma^2\mbbI_d))$ represents noise independently drawn at each SGD update. This adaptation of DP-SGD to Proj-SGD presents certain limitations: First, the deterministic nature of the projection step, although serving as a post-processing phase of DP-SGD, does not contribute additional privacy safeguards. Instead, it induces a "projection error", a consequence of aligning model parameters with the nearest point in $\calQ$.
Second, a stricter privacy budget necessitates scaling up the noise, which in turn leads to an increased noise error. 

To address these limitations, we propose randomized projection (RP), a novel methodology that integrates stochastic elements into the projection phase. The crux of this method lies in leveraging the projection phase as a mechanism to bolster privacy protection. By injecting controlled randomness into the projection, we target achieving the designated privacy budget while concurrently lowering the noise error. 

 
\paragraph{Randomized Projection} 
In randomized projection, we consider $b$-bit quantized parameters with uniformly distributed levels
$$\calQ_{M,b}=\{Q_0, Q_1,\cdots,Q_{2^b-1}\}$$
where $\calQ_{M,b}$ denotes the $b$-bit uniform quantization set with quantization bound $M\in\mbbR^+$, and each quantization level is given by
\begin{equation}\label{eq:UniformQuantizer}
    Q_i = -M + \frac{2M}{2^{b}-1}\cdot i
\end{equation}
Randomized projection is a variant of the classical projection of unquantized inputs onto the quantization set. Formally, the classical projection of input $Q_{in}$ onto $\calQ_{M,b}$ is
\begin{equation}\label{eq:DQP}
    \text{Proj}^D_{\calQ_{M, b}}(Q_{in})=\mathop{\arg\min}\limits_{Q\in\calQ}\|\lfloor Q_{in}, M\rceil-Q\|_2
\end{equation}
where $\lfloor \cdot, M\rceil$ denotes clipping function that clips the parameters into $[-M, M]$. In contrast to classical projection, randomized projection (outlined in Algorithm \ref{alg:RQP}) adds randomness by using a coefficient $q\in(0,1)$, enhancing privacy by making the input value less deducible from its deterministic projection.
\begin{algorithm}[h]
    \caption{{\text{Randomized Projection}}}
    \label{alg:RQP}
    \begin{algorithmic}[1]
        \REQUIRE $k$-bit uniform quantization set $\calQ_{M,b}$, randomness coefficients $q\in[\frac{1}{2^b-1},1)$, quantization input $Q_{in}$
        \STATE Find $Q^*=\text{Proj}^D_{\calQ_{M, b}}(Q_{in})$ using (\ref{eq:DQP})
        \STATE Project $Q_{in}$ onto $\calQ_{M,b}$ randomly:             
        $$\text{Proj}^R_{\calQ_{M,b}}(Q_{in})     
        = \left\{\begin{array}{cc}
           Q^*  &  \text{with probability }q\\
           Q  & \text{for } Q\in\calQ^u\setminus \{Q^*\}\ \text{with probability }\frac{1-q}{2^b-1}
        \end{array}\right.$$
        \STATE \textbf{Return} Output of $\text{Proj}^R_{\calQ_{M,b}}(Q_{in})$
    \end{algorithmic}
\end{algorithm}
\subsection{RQP-SGD}
Based upon randomized projection, we propose an iterative SGD method for solving (\ref{eq:DP-Q-ERM}), termed RQP-SGD. This method is comprehensively detailed in Algorithm \ref{alg:ProbProjNSGD}. RQP-SGD is an adaptation of DP-SGD into a variant of Proj-SGD that incorporates the randomized projection in place of the deterministic one.  

\begin{algorithm}[h]
    \caption{{\text{RQP-SGD}}}
    \label{alg:ProbProjNSGD}
    \begin{algorithmic}[1]
        \REQUIRE Training dataset: $\calS=\{(x_i, y_i)\in\calX\times\calY):i=1,2,\cdots,n\}$, $\rho$-Lipschitz, convex loss function $l$, convex set $\calW\subseteq\mbbR^d$, step size $\eta$, mini-batch size $m$, number of iterations $T$, Quantization set $\calQ_{M,b}$, projection randomness coefficient $q$.
        \STATE Choose arbitrary initial point $w_0\in\calW$.
        \FOR{$t=0$ to $T-1$}
        \STATE Sample a batch $B_t=\{(x_j, y_j)\}_{j=1}^m\leftarrow\calS$ uniformly with replacement.
        \STATE $v_{t+1}=w_t-\eta\cdot\frac{1}{m}[\sum^m_{j=1}\nabla l(f(w_t;x_j);y_j)+G_t)]$\\
        where $G_t\sim\calN(0,\sigma^2\mbbI_d)$ drawn independently each iteration.
        \STATE $w_{t+1}:= \text{Proj}^R_{\calQ_{M,b}}\left(v_{t+1}\right)$ where $\text{Proj}^R_{\calQ_{M,b}}$ denotes the random projection onto $\calQ_r$.
        \ENDFOR
        \STATE \textbf{Return} $W_T$.
    \end{algorithmic}
\end{algorithm}
\paragraph{Privacy of RQP-SGD}
The essence of RQP-SGD lies in its strategy for privacy enhancement. In step 4 of Algorithm \ref{alg:ProbProjNSGD}, it upholds DP by incorporating a Gaussian noise vector, which is integral to DP-SGD. Subsequently, step 5 amplifies privacy protection via randomized projections. This dual-pronged method, which effectively combines DP-SGD and randomized projection, is designed to achieve the desired privacy budget. It simultaneously aims to mitigate the adverse effects of noise on the model's accuracy and performance, thereby addressing the inherent limitations of escalated noise levels in traditional DP-SGD applications. The differential privacy guarantee of RQP-SGD is rigorously established in Theorem \ref{thm:privacy_probnsgd}.


\begin{restatable}{theorem}{Privacythm}
\label{thm:privacy_probnsgd}
    For any $\epsilon>0$, there exists mini-batch sampling rate $\frac{m}{n}$, training iterations $T$, quantization bit $b$ and randomness coefficient $p$, noise scale $\sigma$ such that Algorithm \ref{alg:ProbProjNSGD} achieves $(\epsilon, 0)$-DP.
\end{restatable}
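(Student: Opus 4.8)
The plan is to extract the entire $(\epsilon,0)$ guarantee from the randomized projection (Step 5 of Algorithm~\ref{alg:ProbProjNSGD}), treating the noisy gradient step and the mini-batch subsampling as operations that can only help. The organizing observation is that the Gaussian mechanism alone cannot yield $\delta=0$, so all of the pure-DP guarantee must come from the discretization: composing an unbounded Gaussian step with a \emph{smoothed} projection onto a finite set produces a likelihood ratio that is bounded \emph{pointwise} (hence $\delta=0$) and \emph{uniformly in the data and in $\sigma$}.

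\emph{Step 1 (the randomized projection is a pure-DP map in its input).} Fix an iteration $t$ and regard $v_{t+1}$ as the (data-dependent, Gaussian-perturbed) input to $\text{Proj}^R_{\calQ_{M,b}}$. By the construction in Algorithm~\ref{alg:RQP}, for \emph{every} input value the output lands on any fixed level $Q\in\calQ_{M,b}$ with probability in $[\tfrac{1-q}{2^b-1},\,q]$, since $\tfrac{1-q}{2^b-1}\le\tfrac1{2^b-1}\le q$ by the admissibility constraint on the randomness coefficient $q$ (written $p$ in the statement). Hence, writing $v_{t+1}(d)$ for the pre-projection iterate on dataset $d$ and taking the expectation over its Gaussian noise,
\[
\tfrac{1-q}{2^b-1}\ \le\ \Pr[\,w_{t+1}=Q\mid w_t,d\,]\ =\ \mathbb{E}\big[\,\Pr[\text{Proj}^R(v_{t+1}(d))=Q]\,\big]\ \le\ q ,
\]
and likewise under any adjacent $d'$. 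Therefore the one-step likelihood ratio is at most $\tfrac{q(2^b-1)}{1-q}$, uniformly in $w_t$, in $\sigma$, in the step size, and in the data; i.e.\ iteration $t$ (for each fixed $w_t$) is $(\epsilon_0,0)$-DP with $\epsilon_0=\log\tfrac{q(2^b-1)}{1-q}$ (if $\text{Proj}^R$ acts per coordinate with independent coins, $\epsilon_0=d\log\tfrac{q(2^b-1)}{1-q}$ by independence). The bound is finite for every output, so there is no failure event and $\delta=0$.

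\emph{Step 2 (composition, subsampling, post-processing).} The iterate $w_t$ passed into iteration $t$ is an output of the earlier iterations, so I invoke the adaptive composition theorem; because $\epsilon_0$ is uniform over $w_t$, the joint law of $(w_1,\dots,w_T)$ is $(T\epsilon_0,0)$-DP under \emph{basic} composition (which preserves $\delta=0$ — advanced composition would reintroduce a positive $\delta$), and the returned $W_T$ is a deterministic function of this trajectory, hence $(T\epsilon_0,0)$-DP by post-processing. Optionally, privacy amplification by subsampling with rate $\gamma=m/n$ may be applied first, replacing $\epsilon_0$ by $\log(1+\gamma(e^{\epsilon_0}-1))\le\epsilon_0$, though it is not needed.

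\emph{Step 3 (parameter selection) and main obstacle.} Let $\sigma$ and $m/n$ be arbitrary and $T$ fixed. Taking $q=\tfrac1{2^b-1}$ gives $\epsilon_0=\log\tfrac{2^b-1}{2^b-2}\to0$ as $b\to\infty$, so choosing $b$ large enough forces $T\epsilon_0\le\epsilon$; more precisely, solving $\tfrac{q(2^b-1)}{1-q}=e^{\epsilon/(Td)}$ gives $q=\tfrac{e^{\epsilon/(Td)}}{2^b-1+e^{\epsilon/(Td)}}$, which meets the admissibility constraint $q\ge\tfrac1{2^b-1}$ once $b$ is large enough that $\tfrac{\epsilon}{Td}\ge\log\tfrac{2^b-1}{2^b-2}$, and yields exactly $(\epsilon,0)$-DP. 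There is no deep obstacle here; the crux is Step 1 — recognizing that the smoothed projection onto a finite set turns the data-dependent, $\sigma$-dependent, unbounded Gaussian iterate into a mechanism with a data-independent pointwise likelihood-ratio bound. The points demanding care are: (i) using basic rather than advanced composition to retain $\delta=0$; (ii) justifying adaptive composition via the uniformity of $\epsilon_0$ in $w_t$; and (iii) tracking the $d$-dependence when $\text{Proj}^R$ is applied coordinatewise while still satisfying $q\in[\tfrac1{2^b-1},1)$.
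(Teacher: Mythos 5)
Your proposal is correct, but it proves the theorem by a genuinely different (and cruder) route than the paper. The paper works out the exact one-step output distribution of $\calM^t_{RQP}$, namely
$\frac{2^bq-1}{2^b-1}\bigl[\Phi\bigl(\tfrac{Q_i^+-f_{sgd}}{\sigma_l}\bigr)-\Phi\bigl(\tfrac{Q_i^--f_{sgd}}{\sigma_l}\bigr)\bigr]+\frac{1-q}{2^b-1}$,
and bounds the sup-log-ratio ($D_\infty$) over adjacent mini-batches by pushing $f_{sgd}$ to its extremes, using quasi-convexity of \renyi divergence together with the $\rho$-Lipschitz/$M$-bounded assumptions to bound the sensitivity $C$; the resulting per-step $\epsilon_t$ depends jointly on $\sigma$, $q$, $b$, $\eta$, $m$, $M$, $\rho$. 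Your Step 1 instead throws away the Gaussian entirely: since every input lands on every level with probability in $[\tfrac{1-q}{2^b-1},\,q]$, the likelihood ratio is at most $\tfrac{q(2^b-1)}{1-q}$ uniformly. This is exactly the worst case of the paper's expression (numerator $\le q$, denominator $\ge\tfrac{1-q}{2^b-1}$), so your bound is valid but strictly looser, and it attributes zero privacy benefit to the noise — which suffices for the bare existence claim but erases the $\sigma$--$q$ trade-off that is the paper's main point (and the content of its Figure 2). On the other hand, your argument is more watertight in places the paper glosses over: you make the per-coordinate/dimension dependence explicit, you justify adaptive composition by noting the per-step bound is uniform in $w_t$, and you correctly insist on basic rather than advanced composition to keep $\delta=0$; the paper's "maximum at extreme points" step and its handling of $d$ are less careful. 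Both proofs then finish the same way, via composition over $T$ steps with (optional) subsampling amplification and a choice of $b$ and $q$ driving the per-step cost below $\epsilon/T$.
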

\begin{proof}
    The key to the proof is determining the differential privacy budget for each update in the RQP-SGD process. This total budget is then assembled using a composition method. In each update of RQP-SGD, the gradient is calculated from a mini-batch dataset $B_t=\{(x_j, y_j)\}_{j=1}^m$, which is uniformly sampled from the training dataset $\calS$. The term $f_{sgd}(w_t;B_t)$ represents the standard SGD update employing the stochastic mini-batch $B_t$, which is defined as
    $$f_{sgd}(w_t;B_t)= w_t-\eta\cdot\frac{1}{m}\sum^m_{j=1}\nabla l(f(w_t;x_j);y_j)$$
    In contrast to $f_{sgd}(w_t;B_t)$, step 4 of Algorithm \ref{alg:ProbProjNSGD}, denotes as $f_v(w_t;B_t)$, adds noise drawn from normal distribution. The noise is scaled accordingly, leading to the formula:
    $$f_v(w_t;B_t) = f_{sgd}(w_t;B_t) + \frac{\eta}{m}\cdot\calN(0,\sigma^2\mbbI_d)$$
    The mechanism $\calM^t_{RQP}(w_t,B_t)$ denotes the differential private mechanism used in each RQP-SGD update with the mini-batch $B_t$. For any quantization level $Q_i\in\calQ_{M,b}$, the probability can be expressed as
    $$\begin{array}{rl}
    & Pr\{\calM^t_{RQP}(w_t,B_t)=Q_i\}
       =  \int Pr\{\text{Proj}^R_{\calQ_{M,b}}(v_{t+1})=Q_i\}
        \cdot Pr\{f_v(w_t;B_t)=v_{t+1}\} dv_{t+1}
    \end{array}$$
    This equation combines the probabilities related to the projection $\text{Proj}^R_{\calQ_{M,b}}(v_{t+1})$ achieving a particular quantization level $Q_i$ with those of the function $f_v(w_t;B_t)$ being equal to $v_{t+1}$.
    \begin{itemize}
        \item \textbf{Randomized projection:}\\
        For the projection input $v_{t+1}$, let $i^*=\arg\min_i \|Q_i-v_{t+1}\|^2$, the probability of $\text{Proj}^R_{\calQ_{M,b}}(v_{t+1})$ equaling to $Q_{i^*}$ is given by
        $$ Pr\{\text{Proj}^R_{\calQ_{M,b}}(v_{t+1})=Q_{i^*}\}=q\cdot Pr\{Q_{i^*}^-\leq v_{t+1}<Q_{i^*}^+\}$$
        where $Q_{i^*}^-=Q_{i^*}-\frac{M}{2^b-1}$ and $Q_{i^*}^+=Q_{i^*}+\frac{M}{2^b-1}$. 
        \\For any $Q_j\in\calQ_{M,b}$ but not equal to $Q_{i^*}$, the probability is
        $$ Pr\{\text{Proj}^R_{\calQ_{M,b}}(v_{t+1})=Q_{j}\}=\frac{1-q}{2^b-1}\cdot Pr\{Q_{j}^-\leq v_{t+1}<Q_{j}^+\}$$
        where $Q_{j}^-=Q_{j}-\frac{M}{2^b-1}$ and $Q_{j}^+=Q_{j}+\frac{M}{2^b-1}$.\\
        \item \textbf{Distribution of $f_v(w_t;B_t)$:}\\
        The function $f_v(w_t;B_t)$ introduces noise to $f_{sgd}(w_t;B_t)$. This implies that $v_{t+1}$ adheres to a normal distribution $\calN(f_{sgd}(w_t;B_t), (\frac{\eta}{m}\sigma)^2)$.
    \end{itemize}
    Based on the probability distributions of randomized projection and $f_v(w_t;B_t)$, the probability distribution of $\calM^t_{RQP}(w_t,B_t)$ for any $Q_i\in\calQ_{M,b}$ can be formulated as follows:
    $$\begin{array}{rl}
    &Pr\{\calM^t_{RQP}(w_t,B_t)=Q_i\} \\
    &\\
    =& q\cdot[\Phi(\frac{Q_{i}^+-f_{sgd}(w_t;B_t)}{\sigma_l})-\Phi(\frac{Q_{i}^--f_{sgd}(w_t;B_t)}{\sigma_l})]+
     \frac{1-q}{2^b}\cdot[\Phi(\frac{Q_{i}^+-f_{sgd}(w_t;B_t)}{\sigma_l})-\Phi(\frac{Q_{i}^--f_{sgd}(w_t;B_t)}{\sigma_l})]\\
     &\\
    =& \frac{2^bq-1}{2^b-1}[\Phi(\frac{Q_{i}^+-f_{sgd}(w_t;B_t)}{\sigma_l})-\Phi(\frac{Q_{i}^--f_{sgd}(w_t;B_t)}{\sigma_l})]
    +\frac{1-q}{2^b-1}
    \end{array}$$
    where $Q_{i}^-=Q_{i}-\frac{M}{2^b-1}$, $Q_{i}^+=Q_{i}+\frac{M}{2^b-1}$, and $\sigma_l=\frac{\eta}{m}\sigma$.
    The next step involves finding the upper bound of $\log\frac{Pr\{\calM^t_{RQP}(w_t,B_t)=Q_i\}}{Pr\{\calM^t_{RQP}(w_t,B'_t)=Q_i\}}$ for any two adjacent sets $B_t, B'_t\in\calS$. The upper bound of this logarithmic ratio is synonymous with the definition of the $\infty$-th order of \renyi divergence~\cite{van2014renyi}. Specifically, for two probability distributions $P_1$ and $P_2$ defined over $\calR$, the $\infty$-th order of \renyi divergence~\cite{van2014renyi} is given as
    $$D_{\infty}(P_1\|P_2)=\log\sup\limits_{d\in\calD}\frac{P_1(d)}{P_2(d)}$$
    Utilizing this definition, the following relation can be established for the RQP-SGD update:
    \begin{equation}\label{eq:privacy_bound}
        \begin{array}{cl}
        &\sup\limits_{Q_i\in\calQ_{M,b}}\log\frac{Pr\{\calM^t_{RQP}(w_t,B_t)=Q_i\}}{Pr\{\calM^t_{RQP}(w_t,B'_t)=Q_i\}} \\
        &\\
        =& D_{\infty}(Pr\{\calM^t_{RQP}(w_t,B_t)=Q_i\}\| Pr\{\calM^t_{RQP}(w_t,B'_t)=Q_i\})\\
        &\\
    =&\max\{\log\frac{\frac{2^bq-1}{2^b-1}[\Phi_+(f(B_t))-\Phi_-(f(B_t))]+\frac{1-q}{2^b-1}}{\frac{2^bq-1}{2^b-1}[\Phi_+(f(B'_t))-\Phi_-(f(B'_t))]+\frac{1-q}{2^b-1}}\}\\
    &\\
    \text{where }& \Phi_+(f(B_t))=\Phi(\frac{Q_{i}^+-f_{sgd}(w_t;B_t)}{\sigma_l})\\
    & \Phi_-(f(B_t))=\Phi(\frac{Q_{i}^--f_{sgd}(w_t;B_t)}{\sigma_l})\\
    & \Phi_+(f(B'_t))=\Phi(\frac{Q_{i}^+-f_{sgd}(w_t;B'_t)}{\sigma_l})\\
    & \Phi_+(f(B'_t))=\Phi(\frac{Q_{i}^--f_{sgd}(w_t;B'_t)}{\sigma_l})
    \end{array}\end{equation}
    Given that \renyi divergence is quasi-convex~\cite{van2014renyi}, (\ref{eq:privacy_bound}) achieves its maximum at the extreme points. 
    
    The loss function $l(f(w_t;\cdot);\cdot)$ is $\rho$-Lipschitz with respect to $w_t$, according to Lemma 14.7 in \cite{shalev2014understanding}. This means for any data-label pair $(x_i,y_i)\in\calS$, the norm of the gradient is bounded by $\rho$: $\|\nabla l(f(w_t;x_i);y_i\|\leq \rho$. With the assumption that $w_t$ is bounded by $M$, for any two adjacent $B_t, B'_t\in\calS$, the maximum difference of $f_{sgd}(w_t;\cdot)$ is constrained by $C$: $\max\|f_{sgd}(w_t;B_t)-f_{sgd}(w_t;B_t)\|\leq C$ where $C=M-\eta\rho$. Incorporating the extreme value of $f_{sgd}(w_t;B_t)$ into (\ref{eq:privacy_bound}), the following inequality is obtained:
    $$\log\frac{Pr\{\calM^t_{RQP}(w_t,B_t)=Q_i\}}{Pr\{\calM^t_{RQP}(w_t,B'_t)=Q_i\}}\leq\epsilon_t$$
    where $\epsilon_t=\log\frac{\frac{2^bq-1}{2^b-1}[2\Phi(\frac{a_1}{\sigma_l})-1]+\frac{1-q}{2^b-1}}{\frac{2^bq-1}{2^b-1}[\Phi(\frac{a_2}{\sigma_l})-\Phi(\frac{a_3}{\sigma_l})]+\frac{1-q}{2^b-1}}$, $a_1=\frac{M}{2^b-1}$,$a_2=M+\frac{M}{2^b-1}+C$, and $a_3=M-\frac{M}{2^b-1}+C$. This shows that each RQP-SGD update is $(\epsilon_t,0)$-DP with respect to the stochastic mini-batch $B_t$.

    Considering that RQP-SGD performs a total of $T$ iterations, with each iteration involving the uniform sampling of a stochastic mini-batch with replacement, the total privacy budget can be composed. By leveraging the DP composition theorem~\cite{dwork2014algorithmic} and amplification of DP via subsampling~\cite{balle2018privacy}, the overall privacy budget for RQP-SGD is established as $(T\frac{m}{n}\epsilon_t, 0)$-DP.
\end{proof}

\paragraph{Utility of RQP-SGD}
In Theorem \ref{thm:prob_quant_utility}, we provide the utility guarantee of Algorithm \ref{alg:ProbProjNSGD} based on the convergence analysis of the stochastic oracle model (see \cite{li2017training, shalev2014understanding}) with convex loss. Compared to the utility analysis in \cite{shalev2014understanding}, the RQP-SGD leads to two additional errors: Quantization error ($E_Q$) and noise error ($E_N$). The former is predominantly influenced by the quantization bits ($b$) and the randomness coefficient $(q)$, where a smaller value of $q$ induces higher randomness in quantization, subsequently increasing $E_Q$. However, this increase under a fixed privacy budget can reduce the reliance on added noise of differential privacy, thereby potentially decreasing $E_N$. This presents a delicate trade-off in RQP-SGD: optimizing the value of $q$ becomes crucial in balancing the quantization and noise errors to achieve effective performance within the constraints of the given privacy budget.
\begin{restatable}{theorem}{Utilitythm}
\label{thm:prob_quant_utility}
    Let $\bar{W}_T=\frac{1}{T}\sum^T_{t=1}w_t$. Suppose the parameter set $\calW$ is convex and $M$-bounded, and the quantization set $\calQ_p$ is generated by a randomized quantizer with probability $q$ and $b$-bit. For any $\eta>0$, the excess empirical loss of $\calA_{\text{ProjNSGD}}$ satisfies
    \begin{equation}\label{eq:RQP-utility}
        \begin{array}{cl}
         & \mbbE\left[\hat{\calL}(\bar{w}_T;\calS)\right]-\mathop{\min}\limits_{w\in\calW}\hat{\calL}(w;\calS) 
        \leq  \frac{M^2}{2\eta T}+E_Q+\frac{\eta \rho^2}{2}+E_N
    \end{array} 
    \end{equation}
    where $E_Q=dM^2[\frac{q}{(2^b-1)^2}+\frac{2^{b+1}(2^{b+1}-1)}{3(2^b-1)^2}(1-q)]$ denotes the quantization error and $E_N=\eta \sigma^2 d$ denotes the noise error. 
\end{restatable}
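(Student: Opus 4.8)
The plan is to run the classical convergence argument for projected stochastic gradient descent on a convex $\rho$-Lipschitz objective --- the analysis underlying Theorem~14.8 of \cite{shalev2014understanding}, as used for quantized training in \cite{li2017training} --- and to track the two extra error contributions coming from (i) the Gaussian perturbation in step~4 and (ii) the randomized projection in step~5. Fix $w^{*}\in\arg\min_{w\in\calW}\hat{\calL}(w;\calS)$. Since $\bar{w}_{T}$ is the running average of the iterates, Jensen's inequality gives $\hat{\calL}(\bar{w}_{T};\calS)-\hat{\calL}(w^{*};\calS)\le\frac{1}{T}\sum_{t=1}^{T}\big(\hat{\calL}(w_{t};\calS)-\hat{\calL}(w^{*};\calS)\big)$, and convexity of $\hat{\calL}$ bounds each summand by $\langle\nabla\hat{\calL}(w_{t};\calS),w_{t}-w^{*}\rangle$. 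Writing $\tilde g_{t}$ for the noisy sub-sampled direction actually used, so that $v_{t+1}=w_{t}-\eta\tilde g_{t}$, the facts that the mini-batch is drawn uniformly and that $G_{t}$ is independent and zero-mean give $\mbbE[\tilde g_{t}\mid w_{t}]=\nabla\hat{\calL}(w_{t};\calS)$; hence it suffices to bound $\frac{1}{T}\sum_{t}\mbbE\langle\tilde g_{t},w_{t}-w^{*}\rangle$.

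From $v_{t+1}=w_{t}-\eta\tilde g_{t}$ and the identity $2\langle a-b,a-c\rangle=\|a-b\|^{2}+\|a-c\|^{2}-\|b-c\|^{2}$ one obtains $2\eta\langle\tilde g_{t},w_{t}-w^{*}\rangle=\|w_{t}-w^{*}\|^{2}-\|v_{t+1}-w^{*}\|^{2}+\eta^{2}\|\tilde g_{t}\|^{2}$. I then rewrite $-\|v_{t+1}-w^{*}\|^{2}=-\|w_{t+1}-w^{*}\|^{2}+\big(\|w_{t+1}-w^{*}\|^{2}-\|v_{t+1}-w^{*}\|^{2}\big)$, take expectations, and sum over $t=0,\dots,T-1$, so that the first two terms telescope to at most $\|w_{0}-w^{*}\|^{2}\le M^{2}$. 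For the gradient norm, $\rho$-Lipschitzness (Lemma~14.7 of \cite{shalev2014understanding}) bounds every per-example gradient, hence the mini-batch average, by $\rho$, while $\mbbE\|G_{t}\|^{2}=\sigma^{2}d$ together with the independence of $G_{t}$ gives $\mbbE\|\tilde g_{t}\|^{2}\le\rho^{2}+\sigma^{2}d$ (up to the mini-batch normalisation); dividing the telescoped sum by $2\eta T$ turns these into the $\tfrac{\eta\rho^{2}}{2}$ term and a noise error of the form $E_{N}=\eta\sigma^{2}d$.

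The crux is the leftover term $\mbbE\big[\|w_{t+1}-w^{*}\|^{2}-\|v_{t+1}-w^{*}\|^{2}\big]$. Unlike projection onto a convex set, the randomized quantizer $\text{Proj}^{R}_{\calQ_{M,b}}$ of Algorithm~\ref{alg:RQP} is \emph{not} a non-expansion, so this term cannot simply be discarded; instead I bound it coordinate-wise from the explicit two-branch law of the quantizer. First I replace $v_{t+1}$ by its clip onto $[-M,M]^{d}$, which only decreases the distance to $w^{*}$ because $\calW$ (hence $w^{*}$) is contained in that box. With probability $q$ the quantizer returns the deterministic projection $Q^{*}$, whose deviation from the clipped input is at most half a quantization step, $\tfrac{M}{2^{b}-1}$ per coordinate, contributing $q\,dM^{2}/(2^{b}-1)^{2}$; with probability $1-q$ it returns a uniformly chosen grid point $Q\neq Q^{*}$, and $\sum_{i}(Q_{i}-Q_{i^{*}})^{2}=\big(\tfrac{2M}{2^{b}-1}\big)^{2}\sum_{i}(i-i^{*})^{2}$ is largest when $i^{*}$ is an endpoint, where $\sum_{i=0}^{2^{b}-1}i^{2}=\tfrac{(2^{b}-1)2^{b}(2^{b+1}-1)}{6}$, contributing $(1-q)\,dM^{2}\tfrac{2^{b+1}(2^{b+1}-1)}{3(2^{b}-1)^{2}}$ after summing over the $d$ coordinates (the half-step part being absorbed as lower order in this branch). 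Adding the two branch contributions produces exactly $E_{Q}=dM^{2}\big[\tfrac{q}{(2^{b}-1)^{2}}+\tfrac{2^{b+1}(2^{b+1}-1)}{3(2^{b}-1)^{2}}(1-q)\big]$, while the residual linear term $\langle\mbbE[w_{t+1}\mid v_{t+1}]-v_{t+1},\,v_{t+1}-w^{*}\rangle$ is controlled using the grid symmetry $\sum_{Q\in\calQ_{M,b}}Q=0$ --- which makes $\mbbE[w_{t+1}\mid v_{t+1}]$ a shrinkage of $Q^{*}$ toward the origin --- together with $\|w^{*}\|,\|Q^{*}\|\le M$.

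Finally, collecting the three pieces, dividing the telescoped inequality by $2\eta T$, and using $\|w_{0}-w^{*}\|\le M$ yields (\ref{eq:RQP-utility}). The main obstacle is the third paragraph: one must substitute an explicit second-moment computation for the usual ``projection is a contraction'' step, verify that the arithmetic of the uniform grid collapses to the stated closed form for $E_{Q}$, and check that the bias introduced by the randomized projection does not contaminate the telescoping; everything else is the standard bookkeeping of projected-SGD convergence.
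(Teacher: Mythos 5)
Your proposal follows essentially the same route as the paper's proof: the standard Convex--Lipschitz--Bounded projected-SGD telescoping argument, with the quantization error $E_Q$ obtained from the explicit second moment of the two-branch randomized projection law and the noise error $E_N=\eta\sigma^2 d$ coming from $\mbbE\|G_t\|^2=\sigma^2 d$. The one place you go beyond the paper is the cross term $2\langle w_{t+1}-v_{t+1},\,v_{t+1}-w^*\rangle$: the paper absorbs it by asserting $\|w_{t+1}-u\|^2\le\|w_{t+1}-v_{t+1}\|^2+\|v_{t+1}-u\|^2$ as a ``triangle inequality'' (which does not hold for squared norms in general), whereas you correctly observe that the randomized projection is not a non-expansion and sketch a bias argument via the grid symmetry --- a sketch you would still need to complete, since the bias $\mbbE[w_{t+1}\mid v_{t+1}]-v_{t+1}$ is of order $(1-q)M$ per coordinate and it is not immediate that the resulting linear term is covered by the stated $E_Q$.
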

\begin{proof}
    The key to the proof is determining the boundary of the randomized projection. The projection process $w_{t+1}=\text{Proj}^R_{\calQ_{M,b}}(v_{t+1})$ involves the randomized projection of $\tilde{w}_{t+1}$ onto $\calQ_{M,b}$. For the projection, we have:     
    $$\begin{array}{rl}
       \mathop{\mbbE}\limits_{w\in\calQ_{M,b}}\left[\|v_{t+1}-w_{t+1}\|^2\right]
      \leq   & d(\frac{M}{2^b-1})^2\cdot q +d\mathop{\sum}\limits_{i=1}^{2^b-1}(\frac{2M}{2^b-1}i)^2\frac{1-q}{2^b-1}\\
      &\\
       \leq & d M^2\left[\frac{q}{(2^b-1)^2}+\frac{2^{b+1}(2^{b+1}-1)}{3(2^b-1)^2}(1-q)]\right]
    \end{array}$$
     By the triangle inequality, for $u\in\calW$,
    $$\|w_{t+1}-u\|^2\leq\|w_{t+1}-v_{t+1}\|^2+\|v_{t+1}-u\|^2$$

    Let $ E_Q=dM^2\left[\frac{q}{(2^b-1)^2}+\frac{2^{b+1}(2^{b+1}-1)}{3(2^b-1)^2}(1-q)]\right]$, we can derive the following inequality from the previous principles:
    \begin{equation}\label{eq:inequality}
        \|v_{t+1}-u\|^2-\|w_{t+1}-u\|^2\geq- E_Q
    \end{equation}

    We next analyze the excess empirical loss under the assumption that the loss function is $\rho$-Lipschitz and convex over $\calW$. According to Lemma 14.7 in~\cite{shalev2014understanding}, for all $w\in\calW$ and gradient $\nabla\in\frac{\partial l}{\partial w}$, the norm of the gradient is bounded: $\|\nabla\|\leq \rho$.
    
    Let $w^*$ be the optimal parameter in the parameter space $\calW$, defined as $w^*=\mathop{\min}\limits_{w\in\calW}\hat{\calL}(w;\calS)$. Given that $w_{t+1}$ is the projection of $v_{t+1}$ and $w^*\in\calW$, the following inequality is obtained: 
    $$\begin{array}{cl}
         & \|w_{t}-w^*\|^2-\|w_{t+1}-w^*\|^2 \\
         &\\
        \geq & \|w_{t}-w^*\|^2-\|v_{t+1}-w^*\|^2-E_Q\\
        &\\
        \geq & 2\eta \langle w_t - w^*,\nabla \rangle - \eta^2\|\nabla \|^2-\eta^2\|G_t\|^2 - E_Q
    \end{array}$$
    Taking expectation of both sides, rearranging, and using the fact that $\mbbE[\|\nabla\|^2]\leq \rho^2$ and $\mbbE[\|G_t\|^2]= d\sigma^2$, we have:
    $$\begin{array}{cl}
       \langle w_t - w^*,\nabla_t\rangle\leq  & \frac{1}{2\eta}\mbbE[\|w_t-w^*\|^2-\|w_{t+1}-w^*\|^2] +\frac{\eta}{2}\rho^2+E_Q+\eta\sigma^2 d
    \end{array}$$
    Given the convexity of the loss function $l$, we can further derive the bound of $\mbbE\left[\hat{\calL}(\bar{w}_T;\calS)\right]-\mathop{\min}\limits_{w\in\calW}\hat{\calL}(w;\calS)$: 
    $$\begin{array}{cl}
         & \mbbE\left[\hat{\calL}(\bar{w}_T;\calS)\right]-\mathop{\min}\limits_{w\in\calW}\hat{\calL}(w;\calS) 
        \leq  \frac{M^2}{2\eta T}+E_Q+\frac{\eta \rho^2}{2}+\eta\sigma^2 d
    \end{array} $$
    This is achieved through the analytical methods used for SGD applied to Convex-Lipschitz-Bounded functions~\cite{shalev2014understanding}.
\end{proof}
Theorem \ref{thm:prob_quant_utility} shows the convergence order for $E_Q$ is $O(dM^2[\frac{q}{2^b-1}+(1-q)])$, highlighting the influence of the coefficient $q$ on the utility bound.
\paragraph{RQP vs Proj-DP-SGD}
 We employ the utility bound formally characterized in Theorem \ref{thm:prob_quant_utility} as a basis to compare RQP-SGD against the adaption of DP-SGD to P-SGD (Proj-DP-SGD), which serves as a baseline. This baseline treats privatization and quantization as distinct processes. Our numerical analysis, illustrated in Figure \ref{fig:trade-off-numerical}, assesses RQP-SGD under a strict $(\epsilon,0)$-DP setting, contrasting it with Proj-DP-SGD under a slightly more relaxed $(\epsilon,10^{-7})$-DP. Results in Figure \ref{subfig:trade-off-q95} reveal that RQP-SGD has lower utility bounds than Proj-DP-SGD at equivalent privacy levels. However, as shown in Figure \ref{subfig:trade-off-q90}, a lower projection randomness coefficient in RQP-SGD, which adds more randomness, can result in a higher utility bound, indicating a critical balance between utility bound and privacy in these SGD frameworks.
\begin{figure}[h]
     \centering
     \begin{subfigure}[b]{0.48\linewidth}
         \centering
         \includegraphics[width=\textwidth]{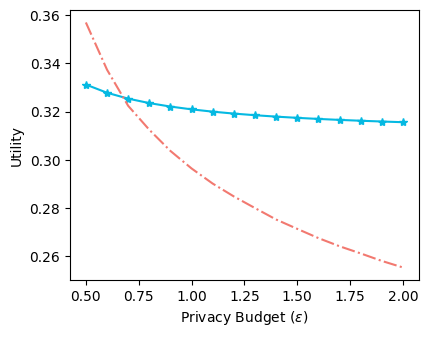}
         \vspace{-15pt}
         \caption{q = 0.90}
         \label{subfig:trade-off-q90}
     \end{subfigure}
    \begin{subfigure}[b]{0.48\linewidth}
         \centering
         \includegraphics[width=\textwidth]{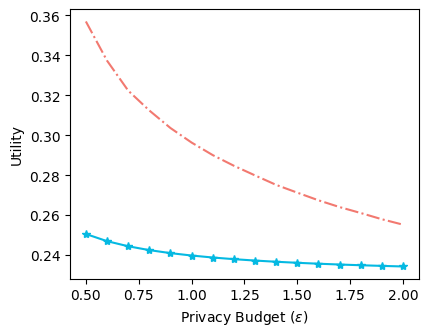}
         \vspace{-15pt}
         \caption{q = 0.95}
         \label{subfig:trade-off-q95}
     \end{subfigure}
     \caption{Noise scale ($\sigma$) and projection randomness coefficient ($q$) trade-off of RQP-SGD (blue) and Proj-DP-SGD. Parameter settings: $b=4,M=0.3,\rho=0.45,T=445,\frac{m}{n}=\frac{1}{445}$.}
     \label{fig:trade-off-numerical}
    \vspace{-10pt}
\end{figure}
\begin{figure*}[]
     \centering
     \begin{subfigure}[b]{0.23\linewidth}
         \centering
         \includegraphics[width=\textwidth]{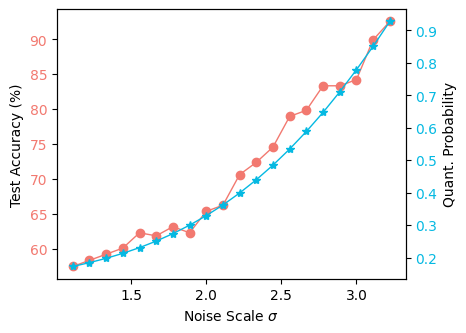}
         \vspace{-15pt}
         \caption{4-bit with $(0.5,0)$-DP}
         \label{subfig:b4e05}
     \end{subfigure}
    \begin{subfigure}[b]{0.23\linewidth}
         \centering
         \includegraphics[width=\textwidth]{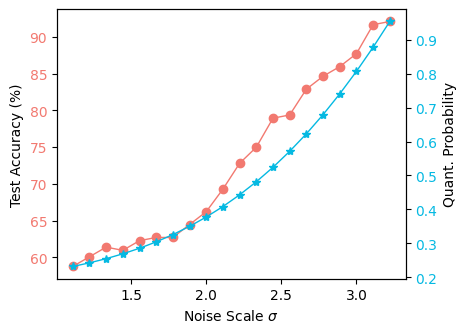}
         \vspace{-15pt}
         \caption{3-bit with $(0.5,0)$-DP}
         \label{subfig:b3e05}
     \end{subfigure}
     \begin{subfigure}[b]{0.23\linewidth}
         \centering
         \includegraphics[width=\textwidth]{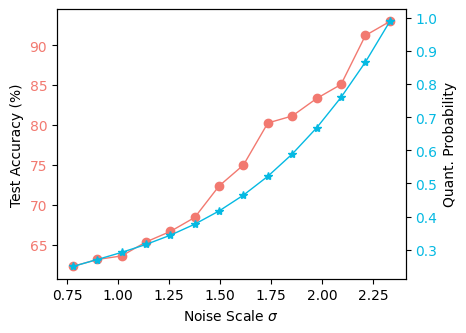}
         \vspace{-15pt}
         \caption{4-bit with $(1.0,0)$-DP}
         \label{subfig:b4e10}
     \end{subfigure}
     \begin{subfigure}[b]{0.23\linewidth}
         \centering
         \includegraphics[width=\textwidth]{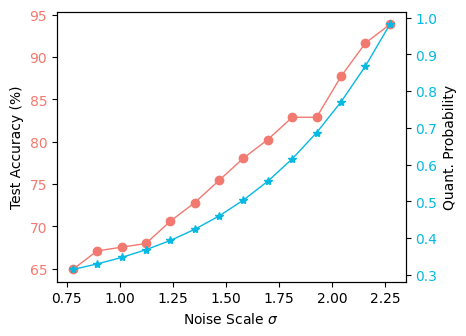}
         \vspace{-15pt}
         \caption{3-bit with $(1.0,0)$-DP}
         \label{subfig:b3e10}
     \end{subfigure}
     
     \caption{Noise scale ($\sigma$) and projection randomness coefficient ($q$) trade-off. Test accuracy values are the median over 10 runs.}
     \label{fig:privacy_accuracy}
    \vspace{-10pt}
\end{figure*}

\section{Experiments}\label{sec:experiments}
\paragraph{Setup}
We applied RQP-SGD to classification tasks using the Diagnostic~\cite{misc_breast_cancer_wisconsin_(diagnostic)_17} and MNIST datasets~\cite{lecun1998gradient}. The Diagnostic dataset, with 569 30-dimensional instances, was split into $80\%$ training and $20\%$ testing. MNIST contains 70,000 28x28 pixel grayscale images of handwritten digits, split into 60,000 for training and 10,000 for testing.
\paragraph{Training Details}We used logistic regression (LogRes)~\cite{wright1995logistic} and linear support vector machine (SVM)~\cite{hearst1998support} classifiers on the Diagnostic dataset, implementing DP-SGD, the adaption of DP-SGD to P-SGD (Proj-DP-SGD), and RQP-SGD with a mini-batch size of 10, step size of 1, and 46 training iterations. For the MNIST dataset, we implemented LogReg classifier with a mini-batch size of 64, step size of 1.0, and 938 training iterations. For all learning algorithms, we employed the gradient clipping technique with $l_2$ norm of 0.45. For RQP-SGD and Proj-DP-SGD, we set $[-0.3,0.3]$ as the parameter space bound and set $4$ as the quantization bit.

\paragraph{Results} We set ($1.0,10^{-7}$) as the privacy budget of DP-SGD and Proj-DP-SGD, and set $(1.0, 0)$ as the privacy budget of RQP-SGD. We report the test accuracy of the resulting ML models in Table \ref{tab:Log_results}. It clearly demonstrates that RQP-SGD achieves better utility performance than Proj-DP-SGD. Especially, the SVM classifier using RQP-SGD leads to 35.84\% median test accuracy gain among the DP-SGD with deterministic projection on the Diagnostic dataset. 

\begin{table}[h]
\centering
\caption{Test accuracy (\%) of Logistic regression classifiers with DP-SGD, Proj-DP-SGD, and $\calA_{\text{RQP-SGD}}$. We report median and standard deviation values over 10 runs. }
\label{tab:Log_results}
\resizebox{0.8\linewidth}{!}{
\begin{tabular}{cccc}
\hline
                             & \begin{tabular}[c]{@{}c@{}}Diagnostic\\ (LogReg Classifier)\end{tabular} & \begin{tabular}[c]{@{}c@{}}Diagnostic\\ (SVM Classifier)\end{tabular} & \begin{tabular}[c]{@{}c@{}}MNIST\\ (LogReg Classifier)\end{tabular} \\ \hline
\multicolumn{1}{c|}{Non-Private}  & 97.37\% (0.56\%)  &  98.68\% (0.68\%)  & 87.25\% (0.07\%)   \\
\multicolumn{1}{c|}{DP-SGD}  & 96.92\% (1.80\%)  & 96.49\% (1.24\%)   & 86.02\% (0.33\%)  \\
\multicolumn{1}{c|}{Proj-DP-SGD} & 94.30\% (1.41\%)  & 69.74\% (18.37\%)  & 84.32\% (0.29\%)  \\
\multicolumn{1}{c|}{RQP-SGD} & 95.18\% (1.42\%)  & 94.74\% (1.53\%)   & 84.81\% (0.30\%)  \\ \hline
\end{tabular}
}
\end{table}
\paragraph{Impact of Noise Scale and Projection Randomness}
To better understand the Privacy-Utility trade-off, we adjusted noise scales while maintaining a fixed privacy budget and quantization bits. The quantization randomness coefficient ($q$) is calculated by Theorem \ref{thm:privacy_probnsgd}. As presented in Fig. \ref{fig:privacy_accuracy}, the quantization randomness coefficient - noise scale curve (shown in the blue line) illustrates that decreasing quantization probability can enhance privacy while allowing less noise. 
On the utility front (represented by the red line), a lower projection randomness coefficient ($q$) leads to degraded test accuracy. 
For instance, when the projection randomness coefficient ($q$) is 0.3, the test accuracy decreases to around 62.5\%. The utility drop is attributed more to the randomness from quantization than from noise addition. 
From our observation, the standard deviation of test accuracy is higher when the projection randomness coefficient ($q$) is lower. This further illustrates the impact of the projection randomness coefficient ($q$) on utility.
\paragraph{Impact of Quantization Bit} 
We also extend our experiments with different quantization bits to explore the impact of the quantization bits. Based on our observation, the test accuracy does not increase as increasing the quantization bits.  This is because the higher the quantization bits, the less randomness is provided by quantization. To maintain the same privacy level, the randomness coefficient ($q$) is lower which results in higher utility loss. 
\section{Conclusion}\label{sec:conclusion}
In this work, we propose RQP-SGD, a new approach to providing differential privacy in ML with quantized computational models. RQP-SGD combines differentially private noise addition with randomized quantization projection, which introduces additional randomness that enables the reduction of noise to improve utility. We theoretically analyze the feasibility of RQP-SGD in the training of ML models with convex objectives and validate the effectiveness of RQP-SGD through experiments on real datasets.

There are scopes for further research on RQP-SGD. First, the utility performance of RQP-SGD is highly sensitive to the projection randomness, RQP-SGD training with low quantization probability is still challenging, due to the high randomness. A comprehensive study of introducing randomness would be a promising avenue for future research. Second, there are other ML problems, such as non-convex optimization and training neural networks, that can be explored for further analysis. 
\section{Acknowledgments}
This research is funded in part by Lehigh CORE grant CNV-S00009854, and the CIF 1617889 grant from the National Science Foundation.


\bibliographystyle{unsrt}  
\bibliography{references}

\end{document}